\newtheorem{definition}{Definition}[section]
\newtheorem{proposition}[definition]{Proposition}
\newcommand{\tmu}{\tilde\mu}
\newcommand{\tn}{\tilde{n}}
\newcommand{\tV}{\tilde{V}}
\DeclareSymbolFont{bbold}{U}{bbold}{m}{n}
\DeclareSymbolFontAlphabet{\mathbbold}{bbold}
\title{Practical Calculation of Gittins Indices for Multi-armed Bandits}
\author{James Edwards\\ Department of Mathematics and Statistics, Lancaster
University\\j.edwards4@lancaster.ac.uk}
\date{February 27, 2019}
\begin{document}
\maketitle
\begin{abstract}
Gittins indices provide an optimal solution to the classical multi-armed bandit
problem. An obstacle to their use has been the common perception that their
computation is very difficult. This paper demonstrates an accessible general
methodology for the calculating Gittins indices for the multi-armed bandit with
a detailed study on the cases of Bernoulli and Gaussian rewards. With
accompanying easy-to-use open source software, this work removes computation as
a barrier to using Gittins indices in these commonly found settings.
\end{abstract}
\textbf{Keywords:} Multi-armed bandits; Gittins index; Stochastic Dynamic
programming

\section{Introduction}
\label{sec:intro_GI}
The Gittins index (GI) is known to provide a method for a Bayes optimal solution
to the multi-armed bandit problem (MAB)
(\citealt{gittins1979bandit,gittins2011multi}).
In addition, Gittins indices (GIs) and their generalisation Whittle indices have
been shown to provide strongly performing policies in many related problems even
when not optimal (\citealt{whittle1980multi}). 

The breakthrough that GIs provided was one of computational tractability since
previously, optimal methods were only practical for very restricted set of small
MAB problems. However, GIs are often not used for the MAB, nor many other
problems for which they are well suited, despite the rarity of other tractable
optimal or near optimal solutions. A major reason is the perception that GIs are
hard to compute in practice:  ``\ldots the lookahead approaches [GI] become
intractable in all but the simplest setting\ldots''
(\citealt{may2012optimistic}) and, ``Logical and computational difficulties have
prevented the widespread adoption of Gittins indices''
(\citealt{scott2010modern}). While it is true that in some settings computation
will not be practical, for common forms of the MAB with a range of standard
reward distributions calculation of GIs is certainly tractable. 

There is a need, though, to make the practice of GI calculation clearer and more
accessible. \cite{powell2007approximate} observed ``Unfortunately, at the time
of this writing, there do not exist easy to use software utilities for computing
standard Gittins indices''. This remains the case today. \cite{gittins2011multi}
provides tables of GI values for some problem settings and some MATLAB code, but
these are limited in scope. \cite{lattimore2016regret} calculates indices for a
finite horizon undiscounted MAB with the C++ code made available but this is
limited to Gaussian rewards only, with fixed observation noise (allowed
to vary here).

This paper describes a general methodology, using stochastic dynamic
programming, for calculating GIs. Details for the method are provided for the
MAB with Bernoulli and normal rewards (respectively BMAB and NMAB). This builds
on work in chapter 8 of \cite{gittins2011multi}, adapting and formalising the
method given there to be more accessible with a more detailed and general
implementation. Convergence tests give accuracy and calculation times for
appropriate settings. New results exploiting monotonicity in the dynamic
programme bring large improvements in memory use and speed.

The central contribution of this work is the accompanying open source code that
has been developed in the R programming language and is available at
\texttt{https://github.com/jedwards24/gittins}. The code and this
paper, enables wider use of GIs in both application and
research by making their calculation for the BMAB and NMAB accessible,
efficient, and easily reproducible.

The rest of this section briefly describes the MAB and how GIs can be used to
give an optimal solution. Section \ref{sec:solution} describes a general method
for calculating GIs. Sections \ref{sec:bmab} and \ref{sec:nmab} then give
details of GI calculation for, respectively, the BMAB and NMAB. Section
\ref{sec:gi_discuss} discusses outstanding issues and extensions to
other MAB problems. Reported calculation times used a Intel Xeon E5-1630v4
3.7GHz processor with 64GB RAM without parallelisation.

\subsection{Problem Definition}
The motivating problem is the classical Bayesian MAB. The notation used assumes
reward distributions from the exponential family, as described in
\cite{edwards2017identification}, but the problems and solution
framework is appropriate for general reward distributions.

At each time $t=0,1,2,\ldots$ an \textit{arm} $a_t\in\{1,\ldots,k\}$ is chosen.
Associated with each arm $a$ is an unknown parameter $\theta_a$ and choosing arm
$a$ at time $t$ results in an observation (or \textit{reward}) $y_t$ drawn from
a density $f(y_t\mid\theta_a)$ which, apart from the parameter $\theta_a$, has known form.
Our belief in the value of $\theta_a$ is given by $g(\theta_a\mid\Sigma_a,n_a)$
where $\Sigma_a$ and $n_a$ are known hyperparameters representing,
respectively, for arm $a$, the Bayesian sum of rewards and the Bayesian number
of observations. At each time the current value of $\Sigma_a$ and $n_a$ give the
\textit{informational state} of the arm. After an observation $y_t$ from arm
$a$, Bayesian updating produces a posterior belief for arm $a$ of
$g(\theta_a\mid\Sigma_a+y_t,n_a+1)$, with beliefs for all other arms unchanged.
 
The total return is the discounted sum of rewards
$\sum\nolimits_{t=0}^{\infty}\gamma^{t}y_{t}$. The objective is to design a
policy (a rule for choosing arms) to maximise the \textit{Bayes' return},
namely the total return averaged over both realisations of the system and prior
information.

The Bayes' return is maximised (\citealt{gittins2011multi}) by the
\textit{Gittins Index policy} which chooses, in the current state, any arm
$a$, satisfying
\begin{equation}
\nu^{GI}(\Sigma _{a},n_{a},\gamma)=\max_{1\leq b\leq k}\nu ^{GI}(\Sigma
_{b},n_{b},\gamma )\,, 
\end{equation}
where $\nu^{GI}$ is the GI which will be defined in Section
\ref{sec:solution}. For a constant discount factor GI values are independent of
time.

Therefore an optimal policy, and hence a solution to the MAB, is given by the GI
of all possible arm states. The efficient calculation of these is the problem
addressed in this paper. The arm subscripts $a$ will often be dropped since we
are only interested in a single arm state for a given index calculation.

\subsection{Using a GI-based policy in Practice}
\label{sec:policy}
A GI-based policy requires the calculation of a GI for each arm state.
The optimality of the policy depends on the accuracy to which the GIs are
calculated but, as long as reasonable accuracy is used, any suboptimality is limited since
(i) decisions involving arms that have very similar GI values will be rare, and
(ii) the cost of a suboptimal action is small when the GI of the suboptimal arm
is close to that of the optimal arm: ``\ldots an approximate largest-index rule
yields an approximate optimal policy'' (\citealt{katehakis1987multi}).
\cite{glazebrook1982evaluation} gives a bound for the lost reward of a
suboptimal policy in terms of GIs so by bounding GI accuracy we can
bound lost reward.

GIs can be calculated either online, as they are needed, or offline where they
are stored then retrieved when needed. Online calculation has the advantage that
GIs need only be calculated for states that are visited which is beneficial when
the state space is large (especially if continuous). Additionally,
only the GI of the arm selected need be recalculated since the GIs of
other arms are unchanged from the previous time.

However, for many applications, online calculation will be too slow to be
used to reasonable accuracy so this paper will focus on offline calculation,
although the methods are applicable to online use. Generally, offline
calculation will be effective whenever online calculation is practical, the only
possible exception being with large state spaces. 

For offline calculation we theoretically need to find in advance the GI for each
state that an arm in the MAB may visit. This may not be possible as the
state space can be infinite, either due to the MAB's infinite time horizon, or a
continuous state space resulting from continuous rewards or priors parameterised
over a continuous range. However, in practice we only need GIs for states that
can be states reached after a suitable finite time $T$. Discounted rewards
become very small for large times and states reached after $T$ observations on a
single arm $a$ will have tight belief distributions so the mean reward
$\Sigma_a/n_a$ becomes a good approximation for $\nu^{GI}(\Sigma_a,n_a,\gamma)$.

The issue of continuous state spaces can be solved by using monotonicity
properties of $\nu^{GI}$ to bound GI values for any state using the GIs
of similar states. From \cite{edwards2017identification},
$\nu^{GI}(c\Sigma,cn,\gamma)$ is decreasing in $c\in\mathbb{R}^{+}$ for any
fixed $\Sigma,n,\gamma$ and is increasing in $\Sigma $ for any fixed
$c,n,\gamma$. With this result $\nu^{GI}(\Sigma,n,\gamma)$ for a discrete grid of $\Sigma$
and $n$ can be used to bound and interpolate $\nu^{GI}$ for any interior
state. From \cite{kelly1981multi}, $\nu^{GI}(\Sigma,n,\gamma)$ is non-decreasing
in $\gamma$ which enables similar interpolation for $\gamma$, if needed.

\section{General Method of Calculation}
\label{sec:solution}
Various methods exist for calculating GIs (for a review see
\citealt{chakravorty2014multi} or \citealt{gittins2011multi}) but we will use
\emph{calibration}. Calibration uses a bandit process with a \emph{retirement option}
(\citealt{whittle1980multi}) which is sometimes referred to as a one-armed
bandit (OAB). The single arm in question is the one for which we wish to find
the GI which we will call the \emph{risky arm}. At each time we have the choice
to \emph{continue} to play this arm or instead choose an arm of known fixed
reward $\lambda$ (the \emph{safe arm}). Since the safe arm does not
change, once it is optimal to choose it then it will continue to be optimal
indefinitely (this is retirement). The GI of the risky arm is the value of
$\lambda$ for which, at the start of the OAB, we are indifferent between
choosing the safe or risky arms. Hence we must find the expected reward (or
\emph{value}) of the OAB for a given $\lambda$ with both initial actions.

Let $Y$ be a random variable of the predictive distribution of the
observed reward from the risky arm with state $(\Sigma,n)$. Then the value
function for the OAB given $\lambda$, $\gamma$, and the risky arm's state
$(\Sigma, n)$ is
\begin{align}   
\label{equ:gi_oab_dp}
V(\Sigma ,n,\gamma,\lambda)=\max\left\{\frac{\Sigma }{n}+
\gamma\mathbb{E}_{Y}\big[V(\Sigma+Y,n+1,\gamma,\lambda)\big];\frac{\lambda}{1-\gamma}\right\}.
\end{align}
The first (recursive) part in the maximisation is the value of risky arm. The
value of the safe arm, $\lambda/(1-\gamma)$, is found directly due to
retirement. We only need the sign, not the absolute value, of $V$ so
$V(\Sigma,n,\gamma,\lambda)$ will instead give the relative value between the
safe and risky arms and \eqref{equ:gi_oab_dp} is replaced by the simpler
\begin{equation}
\label{equ:gi_oab_dp2}
V(\Sigma,n,\gamma,\lambda)=\max\left\{\frac{\Sigma }{n}-\lambda+
\gamma\mathbb{E}_{Y}\big[V(\Sigma +Y,n+1,\gamma ,\lambda) \big]
;0\right\}.
\end{equation}
The Gittins index can then be defined as
\begin{equation}
\label{equ:gi_gi_def}
\nu^{GI}(\Sigma,n,\gamma)=\min\{\lambda:V(\Sigma,n,\gamma,\lambda)=0\} . 
\end{equation}
To find $\lambda$ satisfying \eqref{equ:gi_gi_def} a numerical method must be used.
Observe that $V(\Sigma,n,\gamma,\lambda)$ is decreasing in $\lambda$ for fixed
$\Sigma$, $n$ and $\gamma $ so we can progressively narrow an
interval containing $\lambda$ by repeatedly finding
$V\left(\Sigma,n,\gamma,\hat\lambda\right)$ for appropriate $\hat\lambda$. The
general method for a single state is given in Algorithm \ref{alg:GI_calibration}.
\begin{algorithm}[H]                  
\caption{Calibration for Gittins indices}          
\label{alg:GI_calibration}                           
\begin{algorithmic}                    
    \REQUIRE Parameters $\Sigma$, $n$, $\gamma$. Initial bounds for
    $\nu^{GI}(\Sigma$, $n$, $\gamma)$ given by $u$, $l$. A required
    accuracy $\epsilon$.
    \WHILE{$u-l>\epsilon$}
     \STATE{$\hat\lambda\gets (l+u)/2$}
     \STATE{Calculate $V(\Sigma,n,\gamma,\hat\lambda)$ as given in
     \eqref{equ:gi_oab_dp2}}
     \IF {$V(\Sigma,n,\gamma,\hat\lambda)>0$}
       \STATE{$l\gets\hat\lambda$}
     \ELSE
       \STATE{$u\gets\hat\lambda$}
     \ENDIF
    \ENDWHILE
    \ENSURE An interval $[l,u]$ which contains
    $\nu^{GI}(\Sigma$, $n$, $\gamma)$ where $u-l<\epsilon$ .
\end{algorithmic}
\end{algorithm}
The algorithm initialises an interval in which $\nu^{GI}(\Sigma$, $n$, $\gamma)$
is known to lie (methods for doing this will be given shortly). The interval is
then reduced in size, using bisection and repeated calculation of the value function in \eqref{equ:gi_oab_dp2}, until it is
sufficiently small for our purposes. The mid-point of this interval gives
$\nu^{GI}(\Sigma$, $n$, $\gamma)$ within the desired accuracy $\epsilon$ of the
true value.

Other interval reduction methods can be used but bisection works well.
With an initial interval $[l,u]$ the number of calculations of $V(\Sigma,n,\gamma,\hat\lambda)$ required is
$N_V=\left\lceil\log\left(\frac{\epsilon}{u-l}\right)/\log(0.5)\right\rceil$.

Bounds for $\nu^{GI}(\Sigma$, $n$, $\gamma)$ are needed to initialise an
interval for Algorithm \ref{alg:GI_calibration} and tighter bounds will reduce
computation time.  Two numerical but fast-to-calculate bounds are used in the
software. The lower bound is the \textit{knowledge gradient index} from
\cite{edwards2017identification} which uses a one-step lookahead approximate
solution to the OAB. The upper bound is a similar approximate solution to the
OAB which assumes that all information about the system is revealed after a
single stage. Details on both of these and other bounds as well as their relative
tightness can be found in \cite{edwards2016exploration}.

Each $V(\Sigma,n,\gamma,\hat\lambda)$ in Algorithm \ref{alg:GI_calibration} is
found with stochastic dynamic programming using the recursive equation
\eqref{equ:gi_oab_dp2}. Details of this calculation for the
BMAB and NMAB will be given in Sections \ref{ssec:GI_value_bmab} and
\ref{ssec:GI_value_nmab}.

Note that we now have two state spaces: that of the arms in the original MAB
and that of the risky arm in the OAB dynamic programme. To distinguish
between these the term \textit{stage} will be used for the OAB in place of time.
The arm state $(\Sigma_a,n_a)$ in the MAB forms the initial state of the risky
arm at stage 0 in the OAB which then evolves through states $(\Sigma,n)$.

The backward recursion process of dynamic programming cannot proceed for an
infinite number of stages so a finite horizon approximation is used. An $N$ is
chosen and the values of \textit{terminal states}
$\{(\Sigma,n,\gamma,\lambda):n=n_a+N\}$ at stage $N$ are calculated directly using some approximation. An effective
approximation uses the expected reward at stage $N$ assuming no further
learning:
\begin{equation}
\label{equ:gi_V_N}
V_N(\Sigma,n,\gamma,\lambda)=\frac{\gamma^N}{1-\gamma}\max(\Sigma/n-\lambda,0).
\end{equation}
States at stages prior to stage $N$ are then found recursively using
\eqref{equ:gi_oab_dp2}, calculating backwards through stages $N-1$, $N-2$, etc.
Discounting ensures that this index with a large $N$ gives a very good
approximation which can be made arbitrarily close to $\nu^{GI}$ by increasing
$N$.

The size of the dynamic program and hence the computation required depends on
$N$ so it is desirable to use as small a value as possible that gives the
required accuracy. The approximation error is bounded above by the remaining
reward at stage $N$. However, a study with different $N$ in
Appendix \ref{sec:finite_N} shows that the error is much smaller than this and
quite small values of $N$ are sufficient. The reason is as follows. If the safe arm is the optimal
action at time $N$ then the approximation is exact, while if the risky arm is optimal then it will have
been chosen $N$ times and $\Sigma_N/n_N$ will be a good estimate of its true
expected reward. This and discounting means there is little approximation error.

\section{Calculation Details with Bernoulli Rewards}
\label{sec:bmab}
This section will give details on the calculation of GI for the BMAB
where observed rewards $f(y\mid\theta)\sim Bern(\theta)$. Section
\ref{ssec:GI_value_bmab} will details the OAB solution for a single state.
Calculation of GI for multiple states can be done by repeated use of single
state calculations but Section \ref{ssec:gi_multiple_bmab} considers more
efficient methods.

\subsection{Value Function Calculation - BMAB}
\label{ssec:GI_value_bmab}
The application of dynamic programming in the BMAB case to find
$\nu^{GI}(\Sigma_a,n_a,\gamma)$ is largely straightforward due to binary
outcomes. The (discrete) OAB state space is
$\{(\Sigma,n):\Sigma=\Sigma_a,\Sigma_a+1,\ldots,\Sigma_a+N,
n=n_a,n_a+1,\ldots,n_a+N; \Sigma\leq n\}$, a total of $\frac{1}{2}(N+2)(N+1)$
states. The belief distribution for $\theta$ is $Beta(\Sigma, n-\Sigma)$ so the
predicted probability of a success and the immediate expected reward is
$\Sigma/n$. The value function \eqref{equ:gi_oab_dp2} then becomes
\begin{align*}
V(\Sigma&,n,\gamma,\lambda)\\
&=\max\left\{
\frac{\Sigma}{n}-\lambda+\gamma\left[\frac{\Sigma}{n}V(\Sigma+1,n+1,\gamma,\lambda)
+\left(1-\frac{\Sigma}{n}\right)V(\Sigma,n+1,\gamma,\lambda)\right];0\right\},
\end{align*}
with terminal states as given in \eqref{equ:gi_V_N}. Apart from the use of a
finite $N$ (see \ref{sec:finite_N}), the calculation of
$\nu^{GI}(\Sigma_a,n_a,\gamma)$ for the BMAB is therefore exact. 

\subsection{Multiple State Computation - BMAB}
\label{ssec:gi_multiple_bmab}
For the BMAB there are two dimensions, $\Sigma$ and $n$, for each $\gamma$.
Outcomes $y$ are in $\{0,1\}$ so for priors $\Sigma_0$, $n_0$ we need GIs
for MAB states
\begin{equation}
\label{equ:gi_bmab_states}
\{(\Sigma,n):\Sigma=\Sigma_0,\Sigma_0+1,\ldots,\Sigma_0+T,n=n_0,\ldots,n_0+T,\Sigma\leq
n\}.
\end{equation}
If a set of GI values is needed for any possible prior (which hypothetically
could take any positive value) then this can be done by finding GIs for the
set of states in \eqref{equ:gi_bmab_states} for a reasonable grid of
$\{(\Sigma_0,n_0):\Sigma_0\in(0,1)$,$n_0\in(0,2],\Sigma_0<n_0\}$ and
interpolating where needed. The interpolation is best done offline to create a
two-dimensional matrix of values for each arm that has distinct priors.

An alternative to finding GIs one state at a time as described (the \emph{state
method}) is to use the method given in Section 8.4 of \cite{gittins2011multi}
(referred to here as the \emph{block method}). The block method finds GI values
for a block of states in one go by stepping through an increasing sequence of
values of $\lambda$ and assigning index values to states when the safe arm is
first preferred to the risky arm. By doing this it reuses some value function
calculations and can therefore be more efficient if used on a large number of
states. 

However, the state method has advantages for general use. Firstly, computation
time scales linearly with accuracy for the block method but logarithmically for
the state method. Therefore, the block method tends to be faster for low
accuracy (when both methods are fast) but slower for higher accuracy. GIs for
the whole state space given by \eqref{equ:gi_bmab_states} with $T=100$, $N=200$
and $\epsilon=10^{-4}$ (5151 states) took just over 4 minutes using the state
method, with block method six time slower. Secondly, the state method
parallelises naturally, which can dramatically reduce the time needed. If using
parallelisation states should be assigned to processors in a manner which uses
GIs of neighbouring states to initialise starting intervals for new GIs. For
example, send all states with the same $\Sigma$ to one processor in order of ascending $n$.

\section{Calculation Details with Normal Rewards}
\label{sec:nmab}
This section will give details on the calculation of GI for the NMAB. The
standard version of the NMAB has observations $f(y\mid\theta)\sim N(\theta,1)$
but the method here allows a more general version where each arm has an extra
parameter $\tau_a>0$ for the precision of observations so that
$f(y\mid\theta,\tau)\sim N(\theta,1/\tau_a)$. Each $\tau_a$ is assumed to be
known. In addition, the value function calculation for the NMAB is easier to
describe using a reparameterisation of states to $(\mu,n)$ where $\mu=\Sigma/n$.
This version of GI will be denoted $\nu^{GI(\tau)}(\mu,n,\gamma,\tau)$.

Calculation of GIs for the NMAB is much more challenging than for the BMAB due
to continuous state spaces for both the MAB and the OAB. Section
\ref{ssec:gi_multiple_nmab} first shows how the MAB space can be simplified then
Section \ref{ssec:GI_value_nmab} will detail the calculation for each state.
\subsection{Multiple State Computation - NMAB}
\label{ssec:gi_multiple_nmab}
The NMAB takes continuous outcomes so the arm state space is potentially
continuous in both $n$ and $\mu$ dimensions. Fortunately,
\cite{gittins2011multi} (p 217) gives invariance properties of $\nu^{GI(\tau)}$
for the NMAB from which
\begin{equation}
\label{equ:gi_nmab_transform}
\nu^{GI(\tau)}(\mu,n,\gamma,\tau)=\frac{\mu}{n}+\frac{1}{\sqrt\tau}
\nu^{GI(\tau)}\left(0,\frac{n}{\tau},\gamma,1\right).
\end{equation}
This reduces the problem to a single dimension for each $\gamma$ with only
calculations using $\mu=0$ and $\tau=1$ needed. For a given $\gamma$, $\tau$,
$T$ and prior $n_0$ we then require calculations of $\nu^{GI(\tau)}(0,n,\gamma,1)$ for 
\begin{align*}
n&=\frac{n_0}{\tau},\frac{n_0+\tau}{\tau},\ldots,\frac{n_0+T\tau}{\tau}=\frac{n_0}{\tau},\frac{n_0}{\tau}+1,\ldots,\frac{n_0}{\tau}+T.
\end{align*}
So with $n_0=1$ we would need values for $ n=10,11,\ldots,10+T$ for $\tau=0.1$
and $ n=0.1,1.1,\ldots,0.1+T$ for $\tau=10$. Note that the $1/\sqrt{\tau}$ in
\eqref{equ:gi_nmab_transform} inflates errors when values are transformed for
$\tau<1$ so untransformed values will need to be calculated to a higher accuracy
than if used directly. 

In addition to the monotonicity properties in Section
\ref{sec:policy}, \cite{yao2006some} gives monotonicity results specific to the
NMAB: $\nu^{GI(\tau)}(\mu,n,\gamma,\tau)$ is non-decreasing in $\mu$ and
$\tau$ and non-increasing in $n$. This enables narrower starting intervals
for OAB calculations if GIs are calculated sequentially in $n$.

\subsection{Value Function Calculation - NMAB}
\label{ssec:GI_value_nmab}
The introduction of $\tau$ and reparameterisation with $\mu$ requires a new
value function to replace \eqref{equ:gi_oab_dp2}. Let $Y\sim
N\left(\mu,\frac{1}{n}+\frac{1}{\tau}\right)$ be the predictive distribution of
the observed reward $y$ and $\mu^+=(n\mu+\tau y)/(n+\tau)$ be the posterior of $\mu$ after an observation $y$. Then, for this
section, \eqref{equ:gi_oab_dp2} is replaced by
\begin{equation}
\label{equ:value_tau}
V(\mu,n,\gamma,\tau,\lambda)=\max\left\{\mu-\lambda+
\gamma\mathbb{E}_{Y}\big[V(\mu^+,n+\tau,\gamma,\tau ,\lambda) \big]
;0\right\}.
\end{equation}

Continuous, unbounded rewards make calculation of $V$ much more difficult than
for the BMAB. The OAB process evolves in two dimensions, $\mu$ and $n$. With
fixed $\tau$, the $n$ dimension takes discrete values but the $\mu$ dimension is
continuous and so must be discretised and bounded to ensure a finite number of
states in the dynamic programme. This is done with two new parameters. The
first, $\xi$, describes the extent of the state space and is the number of
standard deviations $\sigma=\sqrt{1/n}$ of $g(\theta\mid\Sigma, n)$ from $\mu_a$
included in the $\tmu$ dimension. The second, $\delta$, controls the fineness of
the discretisation. In addition we will restrict $\mu\geq\mu_a$ which will be
justified shortly. Therefore $\Omega$, the range for $\tmu$ is
\[ 
\Omega=\{\mu:\mu=\mu_a,
\mu_a+\delta,\mu_a+2\delta,\ldots,\mu_a+\left\lceil\frac{\xi\sigma}{\delta}\right\rceil\delta\},
\]
and the full OAB state space is $\{(\mu,n):\mu\in\Omega;n=n_a,n_a+\tau,
n_a+2\tau,\ldots,n_a+N\tau\}$. The total number of states is therefore
$(N+1)(\left\lceil\xi\sigma/\delta\right\rceil+1)\approx N\xi\sigma/\delta$.

The immediate reward of each state is $\mu$ and the values of
the states at stage $N$ are found using \eqref{equ:gi_V_N} as with the BMAB.
Let $\pi((\mu,n),\mu^+,\tau)$ be the transition probability from any state
$(\mu,n)$ to another state $(\mu^+,n+\tau)$, $\mu^+\in\Omega$. Then
$\pi((\mu,n),\mu^+,\tau)$ is given by the probability that the posterior mean is
in the interval $[\mu^+-\delta/2,\mu^++\delta/2)$. Thus we have
\[
\pi((\mu,n),\mu^+,\tau)=P(y_u|\mu,n)-P(y_l|\mu,n)
\]
where $P(y\mid\mu,n)$ is the predictive CDF and
\[
y_l=\frac{(\mu^+-\delta/2)(n+\tau)-n\mu}{\tau}\quad\text{ and }\quad
y_u=\frac{(\mu^++\delta/2)(n+\tau)-n\mu}{\tau}.
\]
As $P$ is Gaussian the transition probabilities are fast to calculate.

Transitions to states with $\mu^+\not\in\Omega$ are dealt with by treating such
states as terminal states with no further recursion. For $\mu^+<\min\Omega$ we
retire to the safe arm (value is zero). This gives no approximation error in the
resulting GI (see Appendix \ref{sec:proof}). For $\mu^+>\max\Omega$ a value
function approximation which assumes no further learning is used. This does cause an
underestimation in $\nu^{GI(\tau)}$ but for $\xi$ sufficiently high the
approximation will be small as the probability of $\theta_a>\max\Omega$ is
very low.

Adapting \eqref{equ:gi_V_N} with $N$ appropriate to the OAB stage, the
approximation for transitions to states with $\mu^+>\max\Omega$ is
\[
V^+=\mathbb{E}_{Y}\left[V_N(\mu^+,n+\tau,\gamma,\lambda\right].
\]
Our complete value function approximation (given $\Omega$) is then
\begin{align*}
\hat V(\mu,&n,\gamma,\tau,\lambda)\\
&=\max\left\{\mu-\lambda+\gamma\left[V^++\sum_{\mu^+\in\Omega}\Big[\pi[(\mu,n),\mu^+,\tau]\hat
V(\mu^+,n+\tau,\gamma,\lambda)\Big]\right];0\right\}.
\end{align*}

The value function $\hat V$ retains the monotonicity properties of $V$ with
$\lambda$, $\mu$ and $n$. A major efficiency saving can be made by exploiting
the following results which follow directly from the monotonicity of $V$
with $\mu$ and $n$.
\begin{proposition}
\label{prop1}
For all $\tmu<\mu$,
$V(\mu,n,\gamma,\tau,\lambda)=0\implies V(\tmu,n,\gamma,\tau,\lambda)=0$.
\end{proposition}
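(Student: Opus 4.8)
The plan is to derive the proposition from two facts about the value function $V$ of \eqref{equ:value_tau}: (i) $V(\mu,n,\gamma,\tau,\lambda)\geq 0$ for every choice of arguments, which is immediate since the definition of $V$ takes a maximum with $0$; and (ii) $V$ is non-decreasing in $\mu$ with the other arguments held fixed. Granting these, the proposition follows in one line: for $\tmu<\mu$ with $V(\mu,n,\gamma,\tau,\lambda)=0$, monotonicity gives $V(\tmu,n,\gamma,\tau,\lambda)\leq V(\mu,n,\gamma,\tau,\lambda)=0$, while non-negativity gives $V(\tmu,n,\gamma,\tau,\lambda)\geq 0$, forcing equality to $0$.

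The substance is therefore (ii), monotonicity of $V$ in $\mu$, which I would prove by backward induction over the finite-horizon truncation. At the terminal stage, $V_N(\mu,n,\gamma,\lambda)=\frac{\gamma^N}{1-\gamma}\max(\mu-\lambda,0)$ is plainly non-decreasing in $\mu$. For the inductive step, I assume $V(\cdot,n+\tau,\gamma,\tau,\lambda)$ is non-decreasing in its mean argument and seek the same property at stage $n$. The immediate term $\mu-\lambda$ is non-decreasing in $\mu$, and the outer $\max\{\cdot,0\}$ preserves monotonicity, so the only genuine issue is the expectation $\mathbb{E}_Y[V(\mu^+,n+\tau,\gamma,\tau,\lambda)]$.

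The main obstacle is that increasing $\mu$ changes the integrand $V(\mu^+,\ldots)$ \emph{and} the law of $Y\sim N(\mu,\tfrac1n+\tfrac1\tau)$ simultaneously, so one cannot simply push the monotonicity through the integral. The clean way around this is to re-center the noise: write $Z=Y-\mu$, so that $Z\sim N(0,\tfrac1n+\tfrac1\tau)$ has a distribution independent of $\mu$, and substitute into $\mu^+=(n\mu+\tau Y)/(n+\tau)$ to obtain $\mu^+=\mu+\tfrac{\tau}{n+\tau}Z$. Then
\[
\mathbb{E}_Y\big[V(\mu^+,n+\tau,\gamma,\tau,\lambda)\big]
=\mathbb{E}_Z\Big[V\big(\mu+\tfrac{\tau}{n+\tau}Z,\,n+\tau,\gamma,\tau,\lambda\big)\Big],
\]
where now $\mu$ enters only through a pointwise (in $Z$) upward shift of the mean argument. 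By the inductive hypothesis the integrand is non-decreasing in $\mu$ for each fixed $Z$, so the expectation is non-decreasing in $\mu$, completing the induction and establishing (ii). Alternatively one could simply invoke the monotonicity of $\nu^{GI(\tau)}$ in $\mu$ from \cite{yao2006some}, but the coupling argument above keeps the proof self-contained and makes the same structural property available directly for the value function used in the dynamic programme.
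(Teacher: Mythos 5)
Your proof is correct, and its endgame is exactly the paper's: the paper gives no proof of Proposition \ref{prop1} beyond stating that it and Proposition \ref{prop2} ``follow directly from the monotonicity of $V$ with $\mu$ and $n$'', which is precisely your one-line deduction from non-negativity (the outer maximum with $0$) plus monotonicity in $\mu$. What you add is a self-contained proof of that monotonicity, which the paper asserts without proof both here and again at the start of Appendix \ref{sec:proof}, citing only the related index-level results of \cite{yao2006some} in Section \ref{ssec:gi_multiple_nmab}. Your re-centering step is the right device: writing $Y=\mu+Z$ with $Z\sim N\bigl(0,\tfrac{1}{n}+\tfrac{1}{\tau}\bigr)$ gives $\mu^+=\bigl(n\mu+\tau(\mu+Z)\bigr)/(n+\tau)=\mu+\tfrac{\tau}{n+\tau}Z$, so the law of the perturbation no longer depends on $\mu$ and the inductive hypothesis passes through the expectation pointwise in $Z$, with the terminal values \eqref{equ:gi_V_N} non-decreasing in the mean and $\max\{\cdot,0\}$ preserving monotonicity. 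One small formality: your induction establishes the property for the finite-horizon truncation (i.e.\ for $\hat V$, the object the dynamic programme actually computes); for the infinite-horizon $V$ of \eqref{equ:value_tau} you would add that monotonicity is preserved in the limit $N\to\infty$ of the value iteration. That is easily supplied and does not affect correctness; your argument buys a self-containedness that the paper lacks.
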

\begin{proposition}
\label{prop2}
For all $\tn<n$,
$V(\mu,n,\gamma,\tau,\lambda)>0\implies V(\mu,\tn,\gamma,\tau,\lambda)>0$.
\end{proposition}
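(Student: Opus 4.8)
The plan is to derive both statements directly from two facts already in hand, rather than re-entering the dynamic programme. First, $V(\mu,n,\gamma,\tau,\lambda)\ge 0$ for every state, since $V$ is defined in \eqref{equ:value_tau} as a maximum with $0$. Second, as noted just before the statement, $\hat V$ (and hence $V$) inherits the monotonicity of the Gittins index, so with $\gamma,\tau,\lambda$ fixed it is non-decreasing in $\mu$ and non-increasing in $n$. Given these, each proposition is a one-line sandwich argument, so I would simply record the two deductions.

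For Proposition \ref{prop1}, I would fix $\tmu<\mu$ and assume $V(\mu,n,\gamma,\tau,\lambda)=0$. Monotonicity in $\mu$ gives $V(\tmu,n,\gamma,\tau,\lambda)\le V(\mu,n,\gamma,\tau,\lambda)=0$, while non-negativity gives $V(\tmu,n,\gamma,\tau,\lambda)\ge 0$, forcing equality. For Proposition \ref{prop2}, I would fix $\tn<n$ and assume $V(\mu,n,\gamma,\tau,\lambda)>0$; since $V$ is non-increasing in $n$, $V(\mu,\tn,\gamma,\tau,\lambda)\ge V(\mu,n,\gamma,\tau,\lambda)>0$, which is the claim. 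No additional structure is needed once the monotonicity is granted.

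The only genuine content is therefore the monotonicity of $V$ itself. The cleanest route is to cite \cite{yao2006some} via the equivalence $V(\mu,n,\gamma,\tau,\lambda)=0\iff\lambda\ge\nu^{GI(\tau)}(\mu,n,\gamma,\tau)$, which holds because $V$ is non-negative and decreasing in $\lambda$, so that ``$\nu^{GI(\tau)}$ non-decreasing in $\mu$ and non-increasing in $n$'' transfers directly to $V$. If instead a self-contained argument were wanted, the $\mu$-direction is easy by backward induction on the stage: the terminal value \eqref{equ:gi_V_N} is non-decreasing in $\mu$, and in the recursion I would couple the predictive draw as $Y=\mu+Z$ with $Z\sim N(0,\tfrac1n+\tfrac1\tau)$, so that $\mu^+=\mu+\tfrac{\tau}{n+\tau}Z$ is increasing in $\mu$ for each fixed $Z$; the induction hypothesis then makes the continuation value non-decreasing in $\mu$, and the affine term $\mu-\lambda$ and the outer maximum preserve this. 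The hard part will be the $n$-direction, since increasing $n$ simultaneously shrinks the predictive variance and damps the update coefficient $\tau/(n+\tau)$, concentrating $\mu^+$ about $\mu$; proving it from scratch would require carrying an auxiliary convexity-in-$\mu$ property of $V$ through the induction and invoking a mean-preserving-spread comparison, which is precisely why appealing to \cite{yao2006some} is the preferable route here.
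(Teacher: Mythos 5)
Your proposal is correct and matches the paper's approach: the paper presents Propositions \ref{prop1} and \ref{prop2} as following ``directly from the monotonicity of $V$ with $\mu$ and $n$'' (non-decreasing in $\mu$, non-increasing in $n$, together with non-negativity), which is exactly your one-line sandwich argument. Your additional discussion of how to justify the monotonicity itself (via the sign equivalence $V=0\iff\lambda\geq\nu^{GI(\tau)}$ and the cited results of \cite{yao2006some}) goes beyond the paper, which simply asserts these properties, but it is consistent with how the paper sources them.
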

These are used in the following manner. For each $n$, $\hat V$ is calculated for states in order of decreasing
$\mu$, then, by Proposition \ref{prop1}, as soon as the safe arm is chosen it
can be chosen for all remaining states with the same $n$. Similarly, working
backwards through $n$, as soon as $V(\mu_a,n,\gamma,\tau,\lambda)$ is greater
than zero, we know from Proposition \ref{prop2} that
$V(\mu_a,n_a,\gamma,\tau,\lambda)>0$ and there is no need for any further
calculation. Together, these results reduce the number of states for which
$\hat V$ needs to be calculated.

The calculation time and the effect of the approximations $N$, $\delta$ and
$\xi$ on accuracy of $\nu^{GI(\tau)}$ can be seen in convergence tests in
Appendix \ref{sec:convergence}. The approximations from too small an $N$ or
$\xi$ cause an underestimation in $\nu^{GI}$ while too high a $\delta$ causes an overestimation.  The
approximation error due to $N$ is smaller than for the BMAB and is very
manageable, as is the error due to $\xi$ which is minimal with $\xi=3$. The
guarantees for $\delta$ are less clear than with $N$ and $\xi$ and the choice of
$\delta$ can have a large effect on the run time of the algorithm. Overall,
better than 3 decimal place accuracy for $\gamma\leq0.99$ can be obtained by
using $N=140$, $\delta=0.01$ and $\xi=3$. Calculation with these settings takes
12 seconds for each $n$. Note that since the same settings are used for all arms
the errors will be correlated and so the differences in GI between arms will
usually be smaller than in the study.

\section{Discussion}
\label{sec:gi_discuss}
This paper gives simple methods, with accompanying code, to easily calculate GIs
for an extensive range of states for the BMAB and the NMAB. The methodology
given can be used to find GIs for other common MAB problems with exponential
family rewards, for example those with exponential, Poisson and Binominal
rewards. This paper did not discuss the calculation of GIs for more general
reward distributions but the same ideas can be used. The greatest
difficulty in calculation comes when states have multiple continuous
parameters which cannot be reduced using invariance results as was done with the
NMAB.

The only area of the BMAB and NMAB that could remain difficult is for $\gamma$
close to 1. To guarantee good accuracy the horizon $N$, and therefore the state
space of the OAB, will be large. In addition, with higher $\gamma$, MABs have
longer effective time horizons before rewards become small and so $\nu^{GI}$
will be needed for more states. Even here, though, the difficulties should not
be overstated. For large $N$ in the OAB, or for large $T$ in the MAB, the
posteriors of arms narrow around $\theta_a$ so that terminal states
approximations are very good. Therefore GIs can still be calculated to good
accuracy and will produce a policy that is far closer to Bayes' optimality than heuristic alternatives.

The methods given here can also be used to calculate some forms of Whittle
indices, a generalisation of GIs, for example in the common MAB variant where
the horizon is finite. A finite horizon adds an extra variable $s$, the time
remaining to the end of the horizon and requires a different set of Whittle
indices $\nu^{WI}(\Sigma,n,\gamma,s)$ for each possible $s$. Each set of states
is smaller than would be needed for GI since only a limited range of $n$ can be
reached given $s$ and prior $n_0$. For each state,
$\nu^{WI}$ can be calculated as for GI with the advantage
that $N$ does not cause any approximation and will often be small.
The storage costs may be greater than for the standard MAB, but otherwise
calculating indices for the finite horizon MAB problem poses little extra
difficulty compared to GIs.

\section{Acknowledgements}
The author was supported by the EPSRC funded EP/H023151/1 STOR-i CDT. The author
would like to thank Kevin Glazebrook, Paul Fearnhead, Richard Weber and Peter
Jacko for their comments and feedback.

\bibliographystyle{agsm}
\bibliography{gittins}
\begin{appendices}
\section{Accuracy of a Finite Horizon Approximation for the BMAB Value Function}
\label{sec:finite_N}
Table \ref{tab:gi_bmab_N} gives the results from a convergence test for the BMAB. The
comparison is made against $\nu^{GI}$ calculated using $N=2000$. Similar experiments in Section \ref{sec:convergence} show convergence
for the NMAB at even smaller $N$.
\begin{table}[H]
\centering
\begin{tabular}{|r|r|r|r|r|}
\hline
\multicolumn{1}{|c|}{\multirow{2}{*}{N}} & \multicolumn{2}{c|}{$\gamma=0.9$}                     & \multicolumn{2}{c|}{$\gamma=0.99$}    \\ \cline{2-5} 
\multicolumn{1}{|c|}{}                   & \multicolumn{1}{c|}{Error} & \multicolumn{1}{c|}{RRN} & \multicolumn{1}{c|}{Error} & RRN      \\ \hline
20                                       & 0.00827                    & 1.21577                  & 0.03738                    & 81.79069 \\
60                                       & 0.00010                    & 0.01797                  & 0.02825                    & 54.71566 \\
100                                      & 0                          & 0.00027                  & 0.01755                    & 36.60323 \\
200                                      & 0                          & 0.00000                  & 0.00557                    & 13.39797 \\
400                                      & 0                          & 0.00000                  & 0.00066                    & 1.79506  \\
800                                      & 0                          & 0.00000                  & 0.00001                    & 0.03222  \\\hline
\end{tabular}
\caption{The error in BMAB $\nu^{GI}$ from using a finite $N$ with
$\Sigma=1$, $n=2$, accuracy $\epsilon=5\times10^{-6}$ and $N,\gamma$ as shown.
RRN is the maximum reward remaining after stage $N$.}
\label{tab:gi_bmab_N}
\end{table}

\section{Results of Convergence Tests for NMAB GI}
\label{sec:convergence}
This section gives the results of convergence tests for the NMAB investigating
the effect of the approximations due to $N$, $\delta$ and $\xi$. For each
$\gamma\in\{0.9,0.99\}$ a benchmark value of $\nu^{GI(\tau)}(0,1,\gamma,1)$ was
calculated using $N=200$, $\xi=6$, $\delta=0.005$ and accuracy
$\epsilon=5\times10^{-5}$. Then the values of $N$, $\delta$ and $\xi$ were
relaxed individually in turn. The difference $\nu^{GI}-\hat\nu^{GI}$ between the
the benchmark GI value and the approximation are given in Tables
\ref{tab:nmab_N} to \ref{tab:nmab_xi} together with the calculation times.

\begin{table}[H]
\centering
\begin{tabular}{|c|c|c|c|}
\hline
N & Time (s) & Error: $\gamma = 0.9$ & Error: $\gamma=0.99$ \\ \hline
20 & 58 & 0.0001 & 0.0084 \\
40 & 94 & 0 & 0.0022 \\
60 & 127 & 0 & 0.0008 \\
80 & 156 & 0 & 0.0004 \\
100 & 183 & 0 & 0.0002 \\
120 & 210 & 0 & 0.0001 \\
140 & 235 & 0 & 0 \\ \hline
\end{tabular}
\caption{Approximation error and run time of GI calculations with
varying $N$ and $\gamma$ as shown and fixed $n=1$, $\tau=1$,
$\epsilon=5\times10^{-5}$, $\delta=0.005$, $\xi=6$.}
\label{tab:nmab_N}
\end{table}

\begin{table}[H]
\centering
\begin{tabular}{|c|c|c|c|}
\hline
\multirow{2}{*}{$\delta$} & \multirow{2}{*}{Time (s)} & \multicolumn{2}{c|}{Error} \\ \cline{3-4}
 &  & $\gamma = 0.9$ & $\gamma=0.99$ \\ \hline
0.08 & 3 & -0.0011 & -0.0059 \\
0.04 & 7 & -0.0002 & -0.0023 \\
0.02 & 24 & 0 & -0.0007 \\
0.01 & 89 & 0 & -0.0002 \\ \hline
\end{tabular}
\caption{Approximation error and run time of GI calculations with
 varying $\delta$ and $\gamma$ as shown and fixed $n=1$, $\tau=1$, $N=200$,
 $\epsilon=5\times10^{-5}$, and $\xi=6$.}
\label{tab:nmab_delta}
\end{table}

\begin{table}[H]
\centering
\begin{tabular}{|c|c|c|c|}
\hline
\multirow{2}{*}{$\xi$} & \multirow{2}{*}{Time (s)} & \multicolumn{2}{c|}{Error} \\ \cline{3-4}
 &  & $\gamma = 0.9$ & $\gamma=0.99$ \\ \hline
2 & 24 & 0 & 0.0109 \\
2.5 & 46 & 0 & 0.0002 \\ 
3 & 73 & 0 & 0 \\ \hline
\end{tabular}
\caption{Approximation error and run time of GI calculations with
varying $\xi$ and $\gamma$ as shown and fixed $n=1$, $\tau=1$, $N=200$,
$\epsilon=5\times10^{-5}$, and $\delta=0.005$.}
\label{tab:nmab_xi}
\end{table}

\section{Proposition for Retirement when $\mu<\mu_a$}
\label{sec:proof}
This considers an approximation to the value function \eqref{equ:value_tau}
which retires (sets $V$ to 0) in any state where $\mu<\mu_a$. 
\begin{proposition}
Let 
\begin{align*}
\tV(\mu,n,\gamma,\tau,\lambda;\mu_a)=
\begin{cases}
\max\left\{\mu-\lambda+
\gamma\mathbb{E}_{Y}\big[\tV(\mu^+\mid Y,n+\tau,\gamma,\lambda;\mu_a) \big]
;0\right\},&\mbox{if }\mu\geq\mu_a\\
 0,&\mbox{otherwise}.
\end{cases}
\end{align*}
The value of $\nu^{GI(\tau)}(\mu_a,n_a,\tau)$ is the same whether
$V$ or $\tV$ is
used for the calibration. That is,
$V(\mu_a,n_a,\gamma,\tau,\lambda)=0\iff\tV(\mu_a,n_a,\gamma,\tau,\lambda;\mu_a)=0$.
\end{proposition}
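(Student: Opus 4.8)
The plan is to recast the statement as the equality of two calibration thresholds and then prove the two resulting inequalities separately. Write $\lambda^{*}=\nu^{GI(\tau)}(\mu_a,n_a,\gamma,\tau)=\min\{\lambda:V(\mu_a,n_a,\gamma,\tau,\lambda)=0\}$ and $\tilde\lambda^{*}=\min\{\lambda:\tV(\mu_a,n_a,\gamma,\tau,\lambda;\mu_a)=0\}$. Since both $V$ and $\tV$ are non-negative and non-increasing in $\lambda$ (the monotonicity in $\lambda$ is already noted for $V$ and is inherited by $\tV$), each zero-set is a half-line, $[\lambda^{*},\infty)$ and $[\tilde\lambda^{*},\infty)$ respectively. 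Hence the claimed equivalence holds for every $\lambda$ if and only if $\lambda^{*}=\tilde\lambda^{*}$. I would also record the stopping-time representations that drive the argument: writing $\mu_0=\mu_a$, $n_t=n_a+t\tau$, and $\mu_t$ for the posterior mean after $t$ continued observations, we have $V(\mu_a,n_a,\gamma,\tau,\lambda)=\sup_{\sigma\geq0}\mathbb{E}[\sum_{t=0}^{\sigma-1}\gamma^{t}(\mu_t-\lambda)]$ over stopping times $\sigma$ (with $\sigma=0$ meaning immediate retirement), while $\tV$ is the same supremum constrained to $\sigma\leq\rho$, where $\rho=\min\{t\geq1:\mu_t<\mu_a\}$ is the first stage at which the posterior mean drops below $\mu_a$; this constraint is exactly the forced retirement once $\mu<\mu_a$.

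The inequality $\tilde\lambda^{*}\leq\lambda^{*}$ is the easy half. Because $\tV$ optimises over a subset of the stopping times available to $V$, we have $0\leq\tV(\mu_a,n_a,\gamma,\tau,\lambda;\mu_a)\leq V(\mu_a,n_a,\gamma,\tau,\lambda)$ for all $\lambda$. In particular $V=0$ forces $\tV=0$, which settles one direction of the equivalence directly and yields $\tilde\lambda^{*}\leq\lambda^{*}$.

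The substantive half is $\tilde\lambda^{*}\geq\lambda^{*}$, i.e.\ $\tV(\mu_a,n_a,\gamma,\tau,\lambda;\mu_a)>0$ for every $\lambda<\lambda^{*}$. Here I would use the ratio characterisation $\lambda^{*}=\sup_{\sigma\geq1}\mathbb{E}[\sum_{t=0}^{\sigma-1}\gamma^{t}\mu_t]/\mathbb{E}[\sum_{t=0}^{\sigma-1}\gamma^{t}]$, whose supremum is attained (exactly in the finite-$N$ programme actually used) by the index stopping rule $\sigma^{*}=\min\{t\geq1:\nu^{GI(\tau)}(\mu_t,n_t,\gamma,\tau)<\lambda^{*}\}$. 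The key point is that $\sigma^{*}\leq\rho$: for every $t<\sigma^{*}$ the rule has not stopped, so $\nu^{GI(\tau)}(\mu_t,n_t,\gamma,\tau)\geq\lambda^{*}$; but any state with $\mu_t<\mu_a$ (which forces $t\geq1$, hence $n_t>n_a$) satisfies $\nu^{GI(\tau)}(\mu_t,n_t,\gamma,\tau)<\nu^{GI(\tau)}(\mu_a,n_t,\gamma,\tau)\leq\nu^{GI(\tau)}(\mu_a,n_a,\gamma,\tau)=\lambda^{*}$, using the strict increase of the NMAB index in $\mu$ (from invariance \eqref{equ:gi_nmab_transform} the index is $\mu$ plus a term depending only on $n,\gamma,\tau$) together with its monotonicity in $n$ from \cite{yao2006some}. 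Thus $\mu_t\geq\mu_a$ for all $t<\sigma^{*}$, so $\sigma^{*}$ is feasible for the constrained problem defining $\tV$. Since $\sigma^{*}$ attains the ratio $\lambda^{*}$, for any $\lambda<\lambda^{*}$ I obtain $\tV(\mu_a,n_a,\gamma,\tau,\lambda;\mu_a)\geq\mathbb{E}[\sum_{t=0}^{\sigma^{*}-1}\gamma^{t}(\mu_t-\lambda)]=(\lambda^{*}-\lambda)\,\mathbb{E}[\sum_{t=0}^{\sigma^{*}-1}\gamma^{t}]>0$, the discounted-count expectation being at least $1$ since $\sigma^{*}\geq1$. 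This gives $\tilde\lambda^{*}\geq\lambda^{*}$, and together with the easy half, $\lambda^{*}=\tilde\lambda^{*}$, hence the equivalence.

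The main obstacle is precisely the bound $\sigma^{*}\leq\rho$ in the hard direction. The monotonicity Propositions \ref{prop1}--\ref{prop2} alone only settle the borderline value $\lambda=\lambda^{*}$, where $V$ and $\tV$ coincide because retirement below $\mu_a$ is optimal; for $\lambda<\lambda^{*}$ the two value functions genuinely differ, so one must show that $\tV$ nonetheless remains strictly positive, which is what the index stopping rule delivers. The argument leans on the \emph{strict} increase of the index in $\mu$, available here only through the additive invariance special to the NMAB. If I wanted a self-contained proof not invoking the ratio characterisation, I would instead run a backward induction on the finite-$N$ dynamic programme, propagating the two facts $\tV\leq V$ and strict positivity stage by stage with Propositions \ref{prop1}--\ref{prop2}.
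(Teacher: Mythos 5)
Your proof is correct, but it takes a genuinely different route from the paper's. The paper argues entirely inside the finite-$N$ dynamic programme: both value functions are non-negative, non-decreasing in $\mu$ and non-increasing in $n$; taking $n^*$ to be the last stage at which $V(\mu_a,n,\gamma,\tau,\lambda)>0$, monotonicity in $\mu$ (Proposition \ref{prop1}) forces $V=0=\tV$ at every later stage for all $\mu\leq\mu_a$, so the two recursions produce identical values at every state with $n>n^*$; hence $\tV(\mu_a,n^*,\gamma,\tau,\lambda;\mu_a)=V(\mu_a,n^*,\gamma,\tau,\lambda)>0$, and monotonicity in $n$ (Proposition \ref{prop2}) pulls strict positivity back to $(\mu_a,n_a)$; if no such $n^*$ exists the two functions coincide everywhere. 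That argument is elementary and self-contained: it uses only Propositions \ref{prop1} and \ref{prop2}, applies verbatim to the finite-horizon programme that is actually computed, and needs nothing specific to the NMAB. You instead work at the level of indices and stopping times: you import the classical ratio characterisation of the Gittins index and the optimality of the index stopping rule $\sigma^*$, then show $\sigma^*\leq\rho$ via the additive location invariance \eqref{equ:gi_nmab_transform} together with monotonicity of the index in $n$. This buys real insight --- it makes explicit that the restriction is harmless because an optimal policy would never continue from a state with posterior mean below $\mu_a$, such states having smaller index --- and it addresses the infinite-horizon problem directly. The price is heavier machinery: the equivalence of the calibration definition \eqref{equ:gi_gi_def} with the ratio definition, and the attainment of that supremum by the index rule, are themselves nontrivial theorems requiring optimal stopping theory in a continuous-state, infinite-horizon setting, none of which the paper needs. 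Two minor caveats: (i) the strictness you lean on is not essential --- with the tie convention ``stop when the index is $\leq\lambda^*$'' the non-strict monotonicity of the index in $\mu$ and in $n$ (the latter following directly from Proposition \ref{prop2}) already yields $\sigma^*\leq\rho$; (ii) your parenthetical claim that the supremum is attained ``exactly in the finite-$N$ programme'' glosses over the fact that the finite-horizon optimal rule is governed by finite-horizon indices, for which the invariance and monotonicity you cite must be re-verified (they do hold, by the same translation argument and Proposition \ref{prop2}, but this should be said). Your closing suggestion of a backward induction on the finite-$N$ programme propagating $\tV\leq V$ and positivity with Propositions \ref{prop1}--\ref{prop2} is, in essence, exactly the paper's proof.
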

\begin{proof}
First note that both $\tV$ and $V$ are always non-negative and are
non-increasing in $n$ and non-decreasing in $\mu$. The proof will compare the
values assigned to states in the dynamic programme for $V$ and $\tV$. 

The terminal states at stage $N$ are the same
for each function. Working backwards, let $n^*$ be the smallest $n$ (earliest
stage) where $V(\mu_a,n,\gamma,\tau,\lambda)>0$. By the monotonicity properties,
$V(\mu,n,\gamma,\tau,\lambda)=0$ for all $\mu\leq\mu_a$, $n>n^*$ so
$\tV(\mu,n,\gamma,\tau,\lambda;\mu_a)=V(\mu,n,\gamma,\tau,\lambda)$ for all
states where $n>n^*$, by definition. Since the calculation for state
$(\mu_a,n^*)$ depends only on these subsequent states and the calculation for
states with $\mu=\mu_a$ is the same for each value function, we have
$\tV(\mu_a,n^*,\gamma,\tau,\lambda;\mu_a)=V(\mu_a,n^*,\gamma,\tau,\lambda)>0$.
Therefore, by the monotonicity with $n$ for both functions, both
$\tV(\mu_a,n_a,\gamma,\tau,\lambda;\mu_a)>0$ and
$V(\mu_a,n_a,\gamma,\tau,\lambda)>0$ and the proposition is satisfied in this
case. If no such $n^*$ exists then then both value functions give identical
values for each state and so the value in the starting state will be identical for each.
\end{proof}

\end{appendices}

\end{document}